\documentclass[lettersize,journal]{IEEEtran}
\usepackage{amsmath,amsfonts}
\usepackage{amsthm}
\usepackage{algorithmic}
\usepackage{array}
\usepackage{textcomp}
\usepackage{stfloats}
\usepackage{url}
\usepackage{verbatim}
\usepackage{subfloat}
\usepackage{booktabs} 
\usepackage{subfigure}
\usepackage{cite}
\usepackage{graphicx}
\usepackage{algorithm,algorithmic}
\def\BibTeX{{\rm B\kern-.05em{\sc i\kern-.025em b}\kern-.08em
    T\kern-.1667em\lower.7ex\hbox{E}\kern-.125emX}}
\usepackage{balance}

\usepackage{tabularx}

\usepackage{hyperref}
\hypersetup{hypertex=true,
colorlinks=true,
linkcolor=blue,
anchorcolor=blue,
citecolor=blue}
\begin{document}
\title{Fed-DLoRA: Efficient Wireless Federated Learning with Dynamic Low-Rank Adaptation}
\author{
        Huaicheng Li,
        Junhui Zhao,~\IEEEmembership{Senior Member, IEEE},
        Haoyu Quan,
        Xiaoming Wang
    \thanks{Copyright (c) 2026 IEEE. Personal use of this material is permitted. However, permission to use this material for any other purposes must be obtained from the IEEE by sending a request to pubs-permissions@ieee.org.}
    
	\thanks{This work was supported by the Fundamental Research Funds for the Central Universities 2025JBZX060
        \textit{(Corresponding author: Junhui Zhao.)}}
	\thanks{Huaicheng Li, Junhui Zhao and Haoyu Quan are with the School of Electronic and Information Engineering, Beijing Jiaotong University, Beijing 100044, China (e-mail: huaichengli@bjtu.edu.cn, junhuizhao@hotmail.com, koterial@hotmail.com).
    
    Xiaoming Wang is with the School of Communication and Information Engineering, Nanjing University of Posts and Telecommunications, Nanjing 210003, China(email: xmwang@njupt.edu.cn)}
}


\maketitle

\begin{abstract}
Federated learning (FL) offers a promising distributed learning paradigm for internet of vehicles (IoV) applications.
However, it faces challenges from communication overhead and dynamic environments. 
Model compression techniques reduce computing and communication burden yet create trade-offs between compression ratios and vehicle participation strategies.
In this paper, we propose a lightweight FL algorithm named federated learning with dynamic low-rank adaptation (Fed-DLoRA), which is combined with low-rank adaptation (LoRA) to effectively reduce parameters and communication costs while enhancing training efficiency.
The convergence analysis of Fed-DLoRA is conducted through stochastic gradient descent optimization coupled with singular value decomposition.
This analysis establishes the theoretical relationships among LoRA rank, vehicular scheduling strategies and the model's convergence characteristics.
Building on these insights, we formulate a joint optimization problem aimed at maximizing system performance.
To address this problem, we propose an adaptive rank, bandwidth and vehicle selection (ARBVS) algorithm that integrates enumeration with greedy optimization strategies.
The algorithm provides efficient rank selection and resource scheduling strategies for each FL communication round, thereby achieving effective performance improvements for the FL system.
Experimental results demonstrate that Fed-DLoRA achieves superior performance compared to conventional federated learning approaches, exhibiting enhanced accuracy, faster convergence, and improved communication efficiency.
\end{abstract}

\begin{IEEEkeywords}
federated learning, internet of vehicles, model compression, low-rank adaptation, greedy algorithm.
\end{IEEEkeywords}

\section{Introduction}
\IEEEPARstart{F}{ueled} by rapid advancements in intelligent transportation systems (ITS), artificial intelligence (AI), and next-generation mobile communication technologies, intelligent connected vehicles (ICVs) are evolving toward higher-order intelligence \cite{prathiba2021cybertwin}. 
These systems are now capable of handling complex tasks, including lane and obstacle detection \cite{shi2024lane}, bird's-eye view (BEV) perception \cite{zhao2024bev}, and trajectory prediction \cite{zhu2022cross}. 
Realizing these functionalities heavily relies on frequent machine learning model training. 
Federated learning (FL), a distributed machine learning framework, effectively mitigates the communication overhead and privacy leakage \cite{zhang2023dflnet} associated with the direct exchange of user data while capitalizing on the computational capabilities of end devices \cite{khan2021federated}. 

Although FL demonstrates significant potential in distributed machine learning, it faces critical challenges. 
At the communication level, frequent model parameter exchanges between clients and servers generate substantial overhead. 
At the computational level, clients' limited processing power restricts their ability to handle intensive local training while managing energy consumption effectively.
These challenges have motivated researchers to explore the use of model compression techniques to optimize the FL training process. 
Recent studies have integrated LoRA with federated learning to address the computational and communication challenges in wireless networks \cite{sun2024federated, zou2025joint}. 
However, these works primarily assume static or quasi-static network topologies where device participation and channel conditions remain relatively stable across communication rounds. 
In contrast, dynamic vehicular networks exhibit continuous vehicle entry/exit from coverage, time-varying channel conditions, and dynamic device participation per round—challenges that fundamentally differ from the stable network assumptions in existing federated LoRA frameworks. 
These dynamics create coupling effects among rank selection, bandwidth allocation, and vehicle participation that necessitate per-round adaptive strategies not addressed in prior work.

In ITS contexts, limited communication and computing resources prevent full vehicle participation in training \cite{quan2024federated}. 
Given ICVs' high mobility \cite{feng2022mobility} and real-time requirements, vehicles must complete model downloading, local training, and parameter uploading within base station coverage, imposing strict temporal constraints.  
Although dynamic bandwidth resource allocation can adjust uplink transmission rates, achieving optimal resource scheduling remains a significant challenge.
Recent works address joint optimization of compression rates, CPU frequency, and bandwidth allocation \cite{chen2023efficient}, spectrum sharing in V2V/V2I communications \cite{liang2019spectrum}, and client selection with bandwidth allocation considering mobility and data heterogeneity \cite{tang2025joint}.  
However, dynamically selecting appropriate ICVs and efficiently allocating resources during FL training with compression techniques remains critical for system performance optimization.

In light of the challenges discussed above, this paper proposes a lightweight FL algorithm named federated learning with dynamic low-rank adaptation (Fed-DLoRA), which employs LoRA technique to optimize FL training. 
Fed-DLoRA reduces local training and uplink transmission delays by decreasing training parameters and communication overhead, enabling greater ICV participation.  
Moreover, to address the specific optimization challenges posed by Fed-DLoRA, we propose an enumeration-based greedy algorithm for adaptive rank selection, ICV selection, and bandwidth allocation.
The contributions of this paper are summarized as follows:
\begin{itemize}
	\item The Fed-DLoRA algorithm is proposed, which significantly reduces the local training parameter size and uplink communication burden of ICVs in the FL system by integrating LoRA into the FL framework, thus increasing the participation rate of ICVs and effectively improving the learning accuracy and convergence speed of AI models.
	\item An analytical expression for the global average gradient of Fed-DLoRA is derived, and the difference in gradient between the central and local models is approximated and quantified using the singular value decomposition, which further reveals the intrinsic connection between the average gradient and the ICV selection strategy and LoRA parameters.
        \item Based on the theoretical derivation of the second point, a multivariate joint optimisation problem is constructed. We propose the ARBVS algorithm for dynamic rank, bandwidth, and vehicle selection optimization with low computational complexity.
\end{itemize}

\section{RELATED WORK}
\label{relatedWork}
\subsection{Communication Efficient Federated Learning}
Model compression techniques achieve lightweighting of AI models by reducing the number of parameters. 
Numerous studies have effectively applied these techniques to FL systems to enhance communication efficiency.
Common model compression methods in FL include pruning, sparsification, and LoRA.

Pruning and sparsification lighten models by removing redundant parameters, typically implemented post-training. 
Jia et al. \cite{jia2024dapperfl} proposed model fusion pruning integrating local and central knowledge.
Jiang et al. \cite{jiang2022model} developed two-stage adaptive pruning for FL. 
Chang et al. \cite{chang2024efficient} adaptively set pruning rates based on parameter importance. 
Guo et al. \cite{guo2024selective} used low-rank matrix decomposition to compress gradient information.

Unlike conventional compression, LoRA is a parameter-efficient fine-tuning strategy applied during training. 
Originally developed for large language models (LLMs), it maintains frozen pre-trained parameters while introducing trainable low-rank adaptation modules. 
Recent studies have demonstrated that this low-rank adaptation is not restricted to large transformer models, and can also be effectively integrated into convolutional neural network (CNN) architectures under appropriate settings. 
Aleem et al. \cite{aleem2024convlora} proposed ConvLoRA for convolutional neural networks, showing that low-rank adaptation can effectively be extended to CNN architectures.
Researchers have begun to apply LoRA to FL systems to improve efficiency. 
Liu et al. \cite{liu2025differentially} combined LoRA with FL and enhanced privacy preservation by adding random noise to the transmitted low-rank parameters.
Wu et al. \cite{wu2024fedfmsl} proposed a sparse-activated LoRA mechanism that flexibly adapts to the heterogeneous resource conditions of different edge devices through a controller module that adaptively decides where to insert the low-rank matrices in the network.
Su et al. \cite{su2024fedra} designed a LoRA FL architecture using a random assignment strategy, where a portion of the parameters of the LoRA module are randomly selected from the clients for aggregation in the model aggregation phase. 
Guo et al. \cite{guo2024selective} proposed a FL model in which only a specific one of the matrices in the LoRA decomposition is aggregated at the server side, which can further significantly reduce the communication overhead.

Sun et al. \cite{sun2024federated} proposed AirFL-LoRA integrating LoRA with over-the-air computation for federated learning.
Zou et al. \cite{zou2025joint} introduced ROBA for joint rank and bandwidth optimization in heterogeneous federated LoRA fine-tuning.

Furthermore, integrating model compression techniques within the FL framework can significantly affect the model's convergence performance. 
Prior studies have investigated convergence analysis of FL systems with various compression methods \cite{feng2022mobility, wu2024fedfmsl}, with some works explicitly addressing dynamic vehicular scenarios \cite{chang2024efficient}. 
However, convergence analysis specifically for LoRA-enhanced FL is limited. While recent works have incorporated LoRA into federated learning frameworks \cite{sun2024federated}, these studies primarily focus on static or quasi-static network environments without explicitly modeling high-speed vehicle mobility. 
Convergence analysis and system optimization for LoRA-based FL in dynamic vehicular scenarios remains largely unexplored. Consequently, this study emphasizes an in-depth examination of model convergence and system optimization in LoRA-enhanced dynamic FL systems.

\begin{figure*}[ht]
    \centering
    \includegraphics[width=0.7\linewidth]{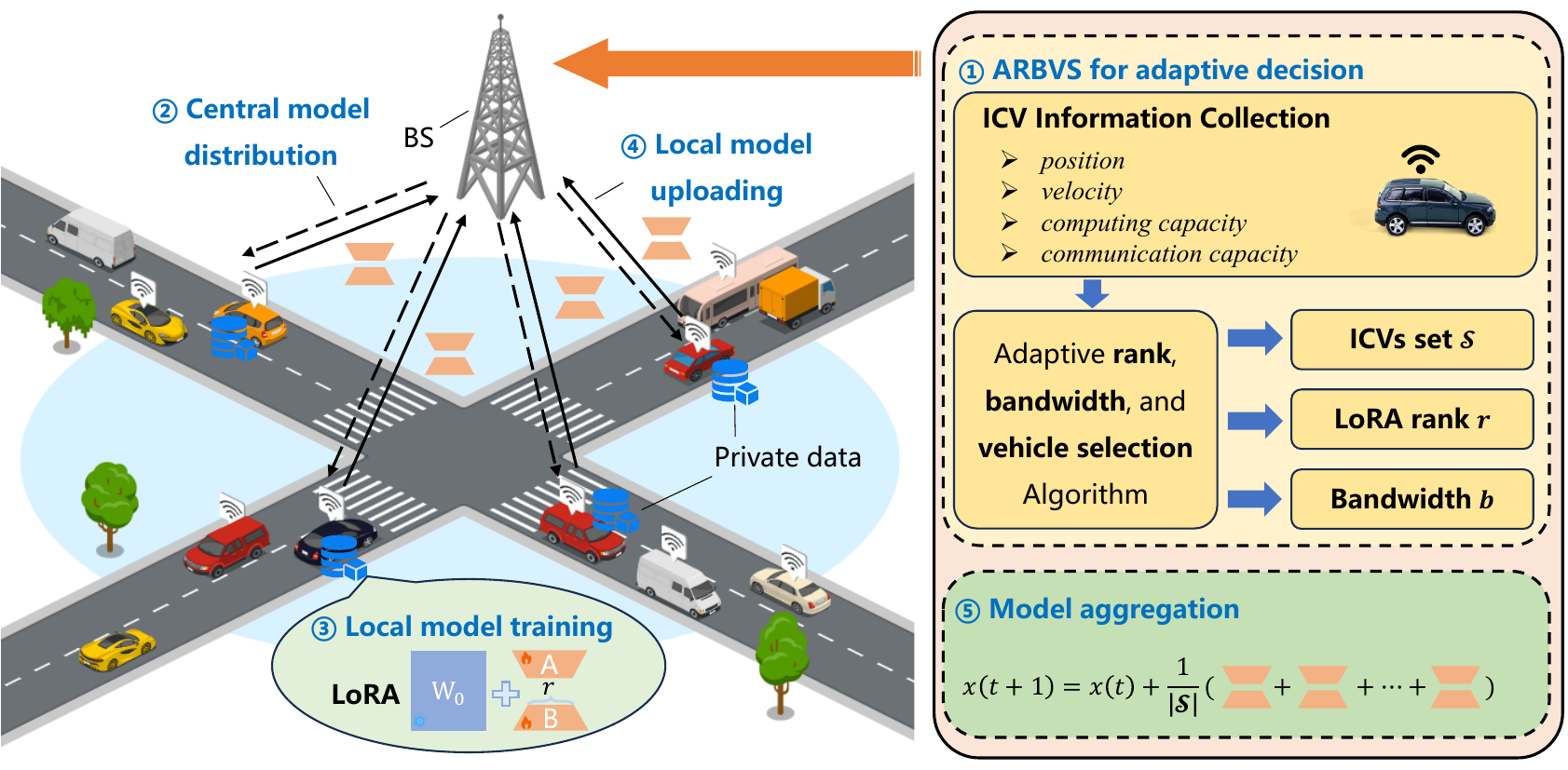}
    \caption{The framework of Fed-DLoRA.}
    \label{fig:framework of Fed-DLoRA}
\end{figure*}
\subsection{Client Selection and Bandwidth Allocation for FL}
Beyond model compression, efficient communication resource allocation is crucial in FL optimization.
Bandwidth allocation optimization is widely studied.
Yang et al. \cite{9264742} proposed iterative algorithms for optimal time, bandwidth, power, and computational frequency allocation. 
Song et al. \cite{song2023distributed} developed distributed reinforcement learning-based resource allocation maximizing spectrum and energy efficiency. 
Lei et al. \cite{lei2024energy} proposed an iterative algorithm based on a lightweight FL architecture, the overall energy consumption of the FL system is further reduced by fine-grained resource allocation and learning parameter tuning.

It is also a common idea to consider the client's adaptive choice while allocating resources.
Shi et al. \cite{shi2020joint} allocated more bandwidth to devices with poor channels and weak computation, selecting devices with minimal update time. 
Chen et al. \cite{chen2023efficient} devised a co-optimisation scheme to co-optimise the compression rate, CPU frequency, uplink transmission power, and bandwidth allocation of the devices. 
Chang et al. \cite{chang2024efficient} integrated vehicle selection, model pruning, and bandwidth allocation for IoV scenarios.  
Zhang et al. \cite{zhang2023joint} proposed deep reinforcement learning-based joint device scheduling and bandwidth allocation.

Despite significant progress in FL, its deployment in IoV renders performance highly sensitive to bandwidth, computational resource management, and client selection policies.
Consequently, optimizing these critical factors within specific network settings remains an urgent challenge.

\section{FL WITH LoRA FOR IoV}
\label{methodFramework}
\subsection{System Setup}
\label{scence}
To simulate realistic traffic conditions, this study constructs a dynamic vehicular networking scenario with a single base station, see Fig. \ref{fig:framework of Fed-DLoRA}. 
The scenario contains a fixed base station and several mobile ICVs. 
The signal coverage radius of the base station is $R$. Given the mobile nature of ICVs, ICVs will frequently enter and exit the coverage area of the base station throughout the training process of FL, which may result in some ICVs located within the coverage area also failing to participate in the FL process. 
At a specific moment $t$, the set of ICVs in the base station coverage area is denoted as $\mathcal{U} = \{u_1, u_2, ... , u_n\}$, and the total number of ICVs is $|\mathcal{U}|$. 
where the speed of the $n$-th ICV is denoted as $v_n$, its local dataset is $D_n$. 
Each ICV is equipped with an onboard unit that supports communication and local learning. 
The ICVs utilize cellular vehicle-to-everything (C-V2X) communication technology and frequency division multiple access (FDMA) for both uplink and downlink communications with the base station. 
Communications are performed via a single-hop link between the ICV and the base station, with the reliability of data transmission ensured by vehicle-to-infrastructure (V2I) technology. 
Direct communication between ICVs is not considered.
\subsection{Framework of Fed-DLoRA}
This subsection focuses on the key steps of the Fed-DLoRA architecture.
The FL training process performs a total of $T$ rounds of FL.
For communication round $t$, Fed-DLoRA performs the following steps:

1) \textit{ARBVS for adaptive decision}: Each ICV is assumed to participate actively in FL training. BS immediately establishes connections with all ICVs, collects speed and position information, and determines an adaptive ICV set $\mathcal{S}$, the corresponding bandwidth allocation $b$, and the LoRA rank $r$.

2) \textit{Central model distribution}: BS sends the current central model $x(t)$ to the ICVs in set $\mathcal{S}$ through the downlink channel.

3) \textit{Local model training}: The selected ICV set $\mathcal{S}$ constructs the local model $x_n(t)$ based on the rank of the optimal LoRA. 
Subsequently employs local sub-dataset $D_n$ for local training to obtain the updated local model $x_n(t+1)$. 
The local model comprises two components: $W_0$  and the LoRA matrix. 
During training, $W_0$ remains frozen and is used solely for inference, while the two low-rank matrices constituting the LoRA undergo gradient descent updates.
The update strategy of the local model is defined as:
\begin{equation}
    \begin{aligned}
        x_n(t+1) &= W_0(t) + B_n(t+1)A_n(t+1),
    \end{aligned}
    \label{eq:fl}
\end{equation}where $W_0(t)$ represents the global model parameters distributed by the server in round $t$. Note that $W_0(t)$ serves as the initial state of the local model $x_n(t)$. $B_n(t+1)$ and $A_n(t+1)$ are the LoRA training matrices of the local model of the $n$-th ICV at time $t+1$.
Based on gradient descent, the local model update formula is:
\begin{equation}
    \begin{aligned}
        x_n(t+1) = x_n(t) - \eta \nabla \mathcal{L}_n(x_n(t)),
    \end{aligned}
    \label{eq:localup}
\end{equation}where $\eta$ is the learning rate and $\mathcal{L}_n(\cdot)$ is the loss function of the $n$-th ICV. 

4) \textit{Local model uploading}: All ICVs in the set $\mathcal{S}$ upload their local models to the BS via a wireless link. 
Unlike conventional FL, only the low-rank matrices from the LoRA method are transmitted during this process. 
Compared to transmitting all parameters of the original model, the LoRA approach, markedly reduces the number of collaborative parameters and significantly decreases upload communication latency.

5) \textit{Model aggregation}: The collected local models are aggregated at the BS. 
The aggregation is performed using uniform aggregation with the following procedure:
\begin{equation}
    \begin{aligned}
        x(t+1) = x(t) - \eta \sum_{n=1}^{|\mathcal{S}|} \alpha_n g_n(x_n(t)),
    \end{aligned}
    \label{eq:agr}
\end{equation}where $\alpha_n$ is the weight of the $n$-th ICV. $x(t+1)$ and $x(t)$ depend only on the local updates uploaded by the ICVs.

Repeat the above steps over and over again until the designated communication round.
Fed-DLoRA aims to minimise the global loss function minimisation:
\begin{equation}
        \min_{x} \mathcal{L}(x) := \sum_{n \in \mathcal{S}}\alpha_n \mathcal{L}_n(x), 
\end{equation}where $\mathcal{S}$ is the set of ICVs participating in the FL process, which is adaptively determined by the optimisation algorithm, $\alpha_n$ is the weighting factor for aggregation correlation.
Fed-DLoRA adopts the LoRA to lighten the FL training model, aiming to shorten the latency of the local model training and uploading process and to reduce the bandwidth requirement of ICVs. 
This will enable more ICVs to participate in the FL training process, which in turn accelerates the convergence of the FL model while improving its stability. 
For more details on the implementation of LoRA, see the subsequent sections.
\subsection{FL with LoRA}
LoRA is a low-resource training technique initially introduced for fine-tuning large models \cite{hu2022lora}, applicable to various neural network architectures. 
Each layer decomposes into two low-rank matrices, simulating forward propagation as their product, as shown at Fig. \ref{fig:LoRA}.
\begin{figure}[t]
    \centering
    \includegraphics[width=0.45\linewidth]{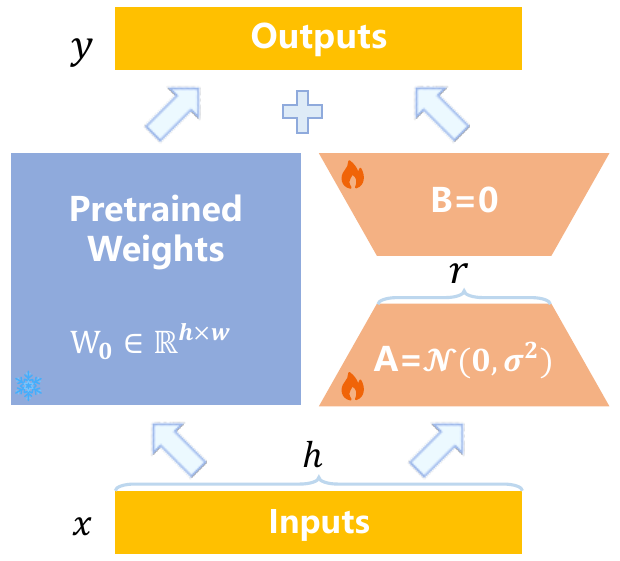}
    \caption{Schematic diagram of LoRA structure.}
    \label{fig:LoRA}
\end{figure}

LoRA decouples the model training process into a residual architecture, and constructs the following form of low-rank decomposition for the current layer's weight matrix $W_0 \in \mathbb{R}^{h \times w}$:
\begin{equation}
    \begin{aligned}
        W = W_0 + \Delta W = W_0 + BA ,
    \end{aligned}
    \label{eq:LoRA}
\end{equation}where $B \in \mathbb{R}^{h \times r}$ and $A \in \mathbb{R}^{r \times w}$ are LoRA matrices, $r$ is the rank of LoRA, and $r << \min(h, w)$. 
The training process freezes the parameter $W_0$ and trains only the parameters in matrices $B$ and $A$. 
The forward propagation process becomes:
\begin{equation}
    y = W_0x + BAx.
    \label{eq:LoRA_forward}
\end{equation}
In this way, the weight matrix involved in backpropagation is transformed from the original single full-rank matrix to two low-rank matrices during the local training process. 
Similarly, only these two low-rank matrices need to be transmitted during the model upload phase. 
Take a single weight matrix $W^{h \times w}$ as an example for illustration: the number of training and transmission parameters for the original full-rank matrix is $h \times w$, where as the number of parameters required is reduced to $r(h + w)$ after applying LoRA. 
This reduces the amount of data to be trained and transmitted, thus reducing collaboration latency.

LoRA reduces the number of trainable and transmitted parameters, which can lower local computation and communication overhead. This creates more favorable conditions for increased ICV participation and can improve the convergence behavior of the FL process under the considered IoV settings. It is important to note that LoRA is utilized here as a lightweight parameterization mechanism for federated training, rather than to claim it inherently replicates the same convergence behavior as in large-model fine-tuning scenarios.
The LoRA rank significantly influences training, uploading delays and FL convergence, necessitating adaptive rank selection through optimization algorithms detailed in subsequent sections.

\section{Optimisation problem analysis for Fed-DLoRA}
\label{analysis}
\subsection{Mobility Limitations}
In the Fed-DLoRA architecture, ICVs participating in federated learning are required to complete all operations within the BS coverage. 
Consequently, the mobility of ICVs and the latency associated with computational tasks must be analyzed independently.

a) \textit{ ICV mobility delay}: As described in \cite{pervej2023resource}, we simulated an ICV mobility scenario. 
Based on each ICV’s position and current speed, the maximum sojourn time limit within the base station coverage is computed as follows:
\begin{equation}
    T_n^{st} = \frac{L_n}{v_n},
    \label{eq:st}
\end{equation}where $L_n$ is the distance of the $n$-th ICV from the BS boundary. 
$L_n$ is determined based on the current position of the $n$-th ICV relative to the BS coverage.

b)\textit{ Computational task latency}: Consistent with the work described in \cite{chang2024efficient}, the latency arises from two primary stages: local model training and local model uploading. 
The latency encountered during local model training is predominantly influenced by the number of training iterations, the volume of local data, and the computing capacity of the ICV. 
The local training delay for the $n$-th ICV is defined as follows:
\begin{equation}
    t_n^l = \frac{\gamma E D_n \bar{w}_n r N_{LoRA}}{f_n N},
    \label{eq:t_n^l}
\end{equation}where $E$ is number of FL training epoch per communication round, $D_n$ is the dataset size of the $n$-th ICV, $\bar{w}_n$ is the number of CPU rounds required by the ICV to process a sample, $r$ is the rank of LoRA, $N$ is the number of parameters in the full-rank ResNet18, and $f_n$ is the computational power of the $n$-th ICV, $\gamma$ is the compensation coefficient. $\gamma$ is used to adjust the computational delay incurred by the base model that performs only forward reasoning in the training phase and does not participate in parameter updating. Its value fluctuates within the range $[1.3, 1.5]$.

The local model uploading delay is affected by the channel transmission rate and the number of model parameters. 
For the $n$-th ICV, the uplink transmission delay is:
\begin{equation}
    t_n^u = \frac{d r N_{LoRA}}{C_n^u},
    \label{eq:t_n^u}
\end{equation}where $N_{LoRA}$ is the number of parameters of LoRA,  $d$ is the model bit width, and $C_n^u$ is the transmission rate of the $n$-th ICV. 
Since FDMA is used in this research, the $n$-th ICV has for $C_n^u$ then:
\begin{equation}
    C_n^u = b_n \log_2(1 + \frac{P_n h_n}{b_n N_0}),
    \label{eq:C_n^u}
\end{equation}where $b_n$ is the bandwidth allocated to the $n$-th ICV, $P_n$ is the transmit power, $h_n$ is the channel gain, and $N_0$ is the power spectral density of Gaussian noise.

\subsection{Convergence Analysis for the Fed-DLoRA Algorithm}
This subsection analyses the convergence of Fed-DLoRA. 
Common assumptions are made based on this work\cite{chang2024efficient, wu2024fedfmsl}.

\textit{Assumption 1(smoothness)}: The locally non-convex loss function of the ICV is $\beta-smooth$:
\begin{equation}
    \left\| {\nabla \mathcal{L}({{\rm{w}}_{\rm{1}}}) - \nabla \mathcal{L}({{\rm{w}}_{\rm{2}}})} \right\| \le \beta \left \| {\rm{w}_{\rm{1}}} - {\rm{w}_{\rm{2}}} \right \|, \forall{{\rm{w}_{\rm{1}}}, {\rm{w}_{\rm{2}}}},
    \label{eq:assumption1}
\end{equation}where $\beta$ is constant and $\nabla \mathcal{L}(\cdot)$ is the gradient of $\mathcal{L}(\cdot)$. $\rm{w}_1$ and $\rm{w}_2$ are the model weights for any two different ICVs.

\textit{Assumption 2(unbiased gradient)}: Unbiased gradient expects that the gradient computed by the client in small batches is equal to the gradient obtained by its loss, i.e., the gradient estimate is accurate.
\begin{equation}
    \mathbb{E}_{\xi|D_n}[g_n(\rm{w})] = \nabla \mathcal{L}_n(\rm{w}), \forall n, \rm{w},
    \label{eq:assumption2}
\end{equation}where $\xi$ is the data of any random batch, $g_n(\cdot)$ is the stochastic gradient of the $n$-th ICV, and $\nabla \mathcal{L}_n(\cdot)$ is the gradient of the $n$-th ICV model.

\textit{Assumption 3(bounded variance)}: Bounded variance requires that the entire gradient exhibits limited fluctuations. 
Although the expectation of $g_n$ is constrained to equal $\nabla \mathcal{L}_n$ through an unbiased gradient, this unbiasedness applies only in the average sense, meaning that individual gradients may still be biased.
\begin{equation}
    \mathbb{E} \left [ \left \| g_n({\rm{w}}) - \nabla \mathcal{L}_n({\rm{w}}) \right \|^2 \right ] \le \sigma^2, \forall{n, \rm{w}}.
    \label{eq:assumption3}
\end{equation}

\textit{Assumption 4(bounded gradient)}: The singular values of the full-rank gradient matrix of the central model are bounded by $M$. 
Here, $\sigma_{l, i}$ denotes the i-th singular value of the full-rank gradient at layer l, while $k_l = min(h_l, w_l)$ specifies the number of singular values at that layer, reflecting the matrix's full-rank dimensionality.
\begin{equation}
    M \ge \sigma_{l, 1} \ge \sigma_{l, 2} \ge \dots \ge \sigma_{l, k_l}, \forall l,
    \label{}
\end{equation}where $\sigma_{l, n}$ is the $n$-th singular value of the $l$-th layer of the model.

For the central model at time $t+1$ , according to the FL central model updating process \eqref{eq:localup} and the smoothing assumption \eqref{eq:assumption1}, then:
\begin{equation}
    \small
    \begin{aligned}
    \mathbb{E}[\mathcal{L}(x(t+1))] 
    & = \mathbb{E}\left[ \mathcal{L}(x(t) - \eta \sum_{n=1}^{|\mathcal{S}|} \alpha_n g_n(x_n(t))) \right] \\
    & \le \underbrace{\mathbb{E}\left[ \mathcal{L}(x(t)) \right]}_{C_1} \\
    & + \underbrace{\mathbb{E}\left[ \left< \nabla \mathcal{L}(x(t)), -\eta \sum_{n=1}^{|\mathcal{S}|} \alpha_n g_n(x_n(t))) \right> \right]}_{C_2} \\
    & + \underbrace{\frac{\beta}{2} \mathbb{E}\left[ \left \| -\eta \sum_{n=1}^{|\mathcal{S}|} \alpha_n g_n(x_n(t))) \right \|^2 \right]}_{C_3}.
    \end{aligned}
    \label{eq:fl2C1C2C3}
\end{equation}

\newtheorem{theorem}{\bf Theorem}
\begin{theorem}
\label{thm:1}
Under Assumptions 1-3, given the central model updating process \eqref{eq:agr}, the expected loss of the central model at time $t+1$ is upper-bounded by:
\begin{equation}
    \begin{aligned}
    \mathbb{E}[\mathcal{L}(x(t+1))] 
    &\le \mathbb{E}[\mathcal{L}(x(t))] - \frac{\eta}{2} \mathbb{E} \left[ \|\nabla \mathcal{L}(x(t))\|^2 \right] \\
    &\quad + \frac{\eta^2 \beta}{2} \sigma^2 \sum_{n=1}^{|\mathcal{S}|} \alpha_n^2 \\
    &\quad + \frac{\eta}{2} \sum_{n=1}^{|\mathcal{S}|} \alpha_n \mathbb{E} \left[ \| \nabla \mathcal{L}(x(t)) - \nabla \mathcal{L}_n(x_n(t)) \|^2 \right].
    \end{aligned}
    \label{eq:theorem_result}
\end{equation}
\end{theorem}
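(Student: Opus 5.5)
We start from the smoothness expansion \eqref{eq:fl2C1C2C3} and bound the terms $C_2$ and $C_3$ separately, keeping $C_1=\mathbb{E}[\mathcal{L}(x(t))]$ unchanged. Throughout we use that the aggregation weights satisfy $\alpha_n\ge 0$ and $\sum_{n\in\mathcal{S}}\alpha_n=1$. We also condition on $x(t)$ and on the local states $x_n(t)$, so that Assumption 2 can be applied inside the expectations.

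For $C_2$, we take the conditional expectation over the minibatches and apply Assumption 2. This turns $g_n(x_n(t))$ into $\nabla\mathcal{L}_n(x_n(t))$, so that
\[
C_2=-\eta\sum_{n}\alpha_n\mathbb{E}\bigl[\langle\nabla\mathcal{L}(x(t)),\nabla\mathcal{L}_n(x_n(t))\rangle\bigr].
\]
For each $n$ we use the polarization identity $-\langle a,b\rangle=\tfrac12\|a-b\|^2-\tfrac12\|a\|^2-\tfrac12\|b\|^2$ with $a=\nabla\mathcal{L}(x(t))$ and $b=\nabla\mathcal{L}_n(x_n(t))$. Summing with weights $\alpha_n$ and using $\sum\alpha_n=1$ gives
\[
C_2=-\tfrac{\eta}{2}\mathbb{E}\|\nabla\mathcal{L}(x(t))\|^2-\tfrac{\eta}{2}\sum_n\alpha_n\mathbb{E}\|\nabla\mathcal{L}_n(x_n(t))\|^2+\tfrac{\eta}{2}\sum_n\alpha_n\mathbb{E}\|\nabla\mathcal{L}(x(t))-\nabla\mathcal{L}_n(x_n(t))\|^2 .
\]
The first and last terms already match the statement. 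The middle term is negative, and we keep it in reserve to absorb part of $C_3$.

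For $C_3$, we split each stochastic gradient as $g_n=(g_n-\nabla\mathcal{L}_n(x_n(t)))+\nabla\mathcal{L}_n(x_n(t))$. We assume the minibatch noises of different ICVs are conditionally independent and zero-mean by Assumption 2. Then the cross terms vanish and the noise parts of different ICVs are orthogonal in expectation, so
\[
C_3=\tfrac{\beta\eta^2}{2}\sum_n\alpha_n^2\mathbb{E}\|g_n-\nabla\mathcal{L}_n(x_n(t))\|^2+\tfrac{\beta\eta^2}{2}\mathbb{E}\Bigl\|\sum_n\alpha_n\nabla\mathcal{L}_n(x_n(t))\Bigr\|^2 .
\]
Assumption 3 bounds the first piece by $\tfrac{\eta^2\beta}{2}\sigma^2\sum_n\alpha_n^2$, which is exactly the variance term in \eqref{eq:theorem_result}. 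For the second piece, Jensen's inequality (convexity of $\|\cdot\|^2$ with weights summing to one) gives the bound $\tfrac{\beta\eta^2}{2}\sum_n\alpha_n\mathbb{E}\|\nabla\mathcal{L}_n(x_n(t))\|^2$.

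Adding $C_1$, the bound on $C_2$ and the bound on $C_3$, the leftover terms are
\[
\Bigl(\tfrac{\beta\eta^2}{2}-\tfrac{\eta}{2}\Bigr)\sum_n\alpha_n\mathbb{E}\|\nabla\mathcal{L}_n(x_n(t))\|^2 .
\]
This quantity is nonpositive exactly when $\eta\le 1/\beta$, and dropping it yields \eqref{eq:theorem_result}.

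\textbf{Main obstacle.} The delicate step is this cancellation. The statement does not list a step-size condition, so we would make the implicit assumption $\eta\le 1/\beta$ explicit. It is the standard choice under Assumption 1. The independence of minibatch noise across ICVs is needed to get the factor $\sum_n\alpha_n^2$ rather than $(\sum_n\alpha_n)^2$, and we would state it explicitly as well.

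If one prefers to avoid the independence assumption, a fallback is available. Apply $\|u+v\|^2\le 2\|u\|^2+2\|v\|^2$ together with Jensen's inequality to $C_3$. This only changes constants, and it would then require the stronger condition $\eta\le 1/(2\beta)$.
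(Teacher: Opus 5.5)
Your proof is correct and is essentially the paper's argument: Assumption 2 and polarization for $C_2$, the noise/mean split of $C_3$ with Assumption 3 and the same implicit cross-ICV independence, and dropping a nonpositive leftover under $\eta\le 1/\beta$. The only difference is that you polarize per ICV and apply Jensen to $\|\sum_n\alpha_n\nabla\mathcal{L}_n(x_n(t))\|^2$ inside $C_3$, whereas the paper polarizes against the aggregate and applies Jensen to the gap term, so its leftover is $\|\sum_n\alpha_n\nabla\mathcal{L}_n(x_n(t))\|^2$ rather than $\sum_n\alpha_n\|\nabla\mathcal{L}_n(x_n(t))\|^2$. One caveat on your fallback: without independence the noise term is only bounded by $\sigma^2$ rather than $\sigma^2\sum_n\alpha_n^2$, so it changes the form of the bound (losing the $1/|\mathcal{S}|$ dependence the paper later exploits), not just the constants.
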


\begin{IEEEproof}[Proof of Theorem \ref{thm:1}]
    The proof is given in \textbf{Appendix \ref{appendix:proof_thm1}}.
    \renewcommand{\IEEEQED}{}
\end{IEEEproof}
In (\ref{eq:theorem_result}), $\nabla \mathcal{L}(x(t))$ is the gradient of the central model, and $\nabla \mathcal{L}_n(x_n(t))$ is the gradient of the $n$-th ICV local model. $\nabla \mathcal{L}_n(x_n(t))$ corresponds to the gradients of $B_n(t)$ and $A_n(t)$ at time $t$. The difference between the two of them cannot be obtained by precise numerical calculation. Therefore, other computational methods need to be explored to represent the difference between the central model gradient and the local model gradient.

\subsection{Singular Value Decompositiont for Gradient Gap}
\label{objFun}
From a matrix perspective, LoRA employs a low-rank matrix to approximate a full-rank matrix during fine-tuning. 
Rather than directly computing the complete gradient, LoRA optimizes the gradients with respect to matrices $B$ and $A$ (i.e., $\nabla_B \mathcal{L}$ and $\nabla_A \mathcal{L}$) and subsequently updates W by means of $\Delta W = BA$. 
Hao et al. \cite{hao2024flora} pointed out that LoRA projects the full-rank gradient onto a lower-dimensional subspace, thereby constraining the gradient update’s degrees of freedom via low-rank decomposition and effectively mapping high-dimensional weight adjustments into a lower-dimensional space. 
Essentially, the term $\nabla \mathcal{L}(x(t)) - \nabla \mathcal{L}_n(x_n(t))$ captures the projection (approximation) error between the native full‑rank gradient and its low‑rank counterpart induced by LoRA, rather than a mere difference in vector dimensionality. 
This low‑rank constraint on the LoRA gradient can be quantitatively characterized using matrix decomposition techniques such as singular value decomposition (SVD).

For layer $l$, the SVD of the full gradient $G_l$ is assumed to be:
\begin{equation}
    G_l = U_l \Sigma_l V_l^T,
    \label{eq:svd1}
\end{equation}where $\Sigma_l$ is the diagonal matrix containing the singular values of the weight matrix $\{\sigma_{l,1}, \sigma_{l,2}, \cdots, \sigma_{l, k_l}\}$. LoRA Gradient The gradient of the low-rank approximation of $G_{l, LoRA}$ retains the largest r singular values, and can be expressed as follows:
\begin{equation}
    G_{l, LoRA} = \sum_{i=1}^r \sigma_{l, i} u_{l, i} v_{l, i}^T.
    \label{eq:svd2}
\end{equation}

Then the difference $\Delta G_l$ corresponds to the remainder:
\begin{equation}
    \Delta G_l = G_l - G_{l, LoRA} = \sum_{i=r+1}^{k_l} \sigma_{l, i} u_{l, i} v_{l, i}^T.
    \label{eq;svd3}
\end{equation}where $k_l=\min(h_l, w_l)$ and $h$ and $w$ are the weight dimensions of the model $l$.

The monolayer difference is expressed in terms of the Frobenius norm, which is obtained according to \textit{Assumption 4}:
\begin{equation}
    \left \| \Delta G_l \right \|_F = \sqrt{\sum_{i=r+1}^{k_l} \sigma_{l, i}^2}
    \le M \sqrt{k_l - r}.
    \label{eq:svd4}
\end{equation}

Then the difference paradigm of the whole model is:
\begin{equation}
    \begin{aligned}
        \left \| \Delta G_{total} \right \|_F
        = \sqrt{\sum_{l=1}^L \left \| \Delta G_l\right \|_F^2} 
        \le M\sqrt{K-Lr},
    \end{aligned}
    \label{eq:svd5}
\end{equation}where $K=\sum_{l=1}^L k_l$ denotes the total number of singular values of all weight matrices in the model, and $Lr$ denotes the sum of the LoRA ranks of all layers.
Substituting \eqref{eq:svd5} into \eqref{eq:theorem_result} yields:
\begin{equation}
    \begin{aligned}
        \mathbb{E}[\mathcal{L}(x(t+1))] 
        &\le \mathbb{E}[\mathcal{L}(x(t))] - \frac{\eta}{2} \mathbb{E} \left[ \|\nabla \mathcal{L}(x(t))\|^2 \right] \\
        &\quad + \frac{\eta^2 \beta}{2} \sigma^2 \sum_{n=1}^{|\mathcal{S}|} \alpha_n^2 + \frac{\eta}{2} M^2\sum_{n=1}^{|\mathcal{S}|} \alpha_n (K-Lr).
    \end{aligned}
    \label{eq:g1}
\end{equation}

The upper bound on the gradient of the FL process at the $\tau_t$ moment can be obtained by shifting the term:
\begin{equation}
    \begin{aligned}
        \mathbb{E} \left [ \left \| \nabla \mathcal{L}(x(t)) \right \|^2 \right]
        \le \frac{2}{\eta} \left ( \mathbb{E}\left[ \mathcal{L}(x(t)) \right ] - \mathbb{E}[\mathcal{L}(x(t+1))] \right ) \\
        + \eta \beta \sigma^2\sum_{n=1}^{|\mathcal{S}|} \alpha_n^2
        + M^2\sum_{n=1}^{|\mathcal{S}|} \alpha_n (K-Lr).
    \end{aligned}
    \label{eq:g2}
\end{equation}

Summing inequality \eqref{eq:g2} over $t=1, \dots, T$ and dividing both sides by $T$ yield:
\begin{equation}
    \begin{aligned}
        \frac{1}{T}\sum_{t=1}^T \mathbb{E}\left [ \left \| \nabla \mathcal{L}(x(t))\right \|^2 \right ]
        \le 
        \frac{2}{\eta T} \left ( \mathbb{E}\left[ \mathcal{L}(x(1)) \right ] - \mathcal{L}^* \right ) \\
        + \eta \beta \sigma^2\sum_{n=1}^{|\mathcal{S}|} \alpha_n^2
        + M^2\sum_{n=1}^{|\mathcal{S}|} \alpha_n (K-Lr),
    \end{aligned}
\end{equation}where we want the central model to converge to the desired value $\mathcal{L}^*$ , $\mathcal{L}^* := min_{x}\mathcal{L}(x)$ at time $T+1$. 
In this paper we consider subdatasets of the same size for each ICV that are available, then we have:
\begin{equation}
        \begin{aligned}
            \frac{1}{T}\sum_{t=1}^{T}\mathbb{E}\left [ \left \| \nabla \mathcal{L}(x(t)) \right \|^2 \right]
            \le
            \frac{2}{\eta T} \left ( \mathbb{E}[\mathcal{L}(x(1)) - \mathcal{L}^*] \right )
            + \frac{\eta \beta \sigma^2}{|\mathcal{S}|} \\
            + M^2(K-Lr).
        \end{aligned}
\end{equation}

This demonstrates that the learning performance of Fed-DLoRA is closely related to both the ICV selection strategy and the LoRA rank. 
Consequently, determining how to regulate these influencing factors to optimize FL system performance is a pressing challenge. 
In the subsequent sections, we formulate a joint optimization problem and propose a low-time-complexity algorithm aimed at maximizing the overall performance of the FL system.

\section{JOINT OPTIMIZATION OF RANK, BANDWIDTH AND ICV SELECTION}
\label{optProblem}
In this section, we introduce an adaptive algorithm designed to address the joint optimization of LoRA rank, bandwidth allocation, and ICV selection in Fed-DLoRA systems. 
The algorithm employs a greedy strategy to efficiently minimize the objective function while adhering to strict delay and bandwidth constraints, making it particularly well-suited to the dynamic and resource-constrained environment characteristic of ICVs.
\subsection{Problem Formulation}
We consider a FL optimization process in an IoV scenario using FDMA. BS must adaptively coordinate all ICVs within its coverage area at time $t$. 
The objective is to make adaptive Fed-DLoRA decisions by selecting a subset $S$ of ICVs, determining the rank $r$ of the Fed-DLoRA system, and assigning communication bandwidth $b_n$  to each ICV in subset $S$.
The goal is to minimise the following function:
\begin{subequations}
    \label{eq:optimization}
        \begin{align}
        \mathcal{P}: \min_{\{b_n, \mu_n, r\}} & \quad \frac{2}{\eta T} \left ( \mathbb{E}[\mathcal{L}(x(1)) - \mathcal{L}^*] \right ) + \frac{\eta \beta \sigma^2}{|\mathcal{S}|} + M^2(K - Lr) \label{optimization:obj} \\
        \text{s.t.} \quad & C_1: \quad \mu_n \in \{0, 1\}, \quad \forall n = 1,\dots,\mathcal{U}, \label{optimization:c1} \\
        & C_2: \quad \sum_{n=1}^{|\mathcal{U}|} \mu_n = |\mathcal{S}|, \label{optimization:c2} \\
        & C_3: \quad \sum_{n=1}^{|\mathcal{U}|} \mu_n b_n \leq B ,\label{optimization:c3} \\
        & C_4: \quad 0 \leq b_n \leq B, \quad \forall n \in \mathcal{S}, \label{optimization:c4} \\
        & C_5: \quad t_n^l + t_n^u \le T_n^{st} \le \frac{L_n}{v_n}, \quad \forall n \in \mathcal{S}, \label{optimization:c5}
        \end{align}
\end{subequations}where $\mu_n$ indicates whether the ICV is selected or not, and $1$ if it is selected. 
$T_n$ is the total latency of $n$-th, including the local training latency $t_n^l$ and the model uplink transmission latency $t_n^u$.  
The total bandwidth $B$ limits the communication bandwidth allocated to each ICV.

In this study, we analyze the objective function associated with the optimization problem. 
As shown in \eqref{optimization:obj}, the first term is a constant.
The second and third terms are linearly related to $\frac{1}{|S|}$ and $r$, respectively. 
Constraints \eqref{optimization:c3}–\eqref{optimization:c5} ensure that the training and upload delays do not exceed the dwell time, the ICV selection variable remains binary, and the allocated bandwidth does not exceed the total available capacity.
Our objective is to maximize the number of ICVs selected at time $t$ while simultaneously achieving the highest possible rank value $r$. 
In this process, we concurrently optimize the three variables $\{b_n, \mu_n, r\}$. 
Next, we analyze the constraints and variables inherent in the optimization problem. 
Specifically, $b_n$ and $r$ are continuous variables, whereas $\mu_n$ is a binary variable, and both the objective function and the constraints are nonlinear. 
Moreover, the interaction between $b_n$ and r jointly influences the delay $T_n$. 
This combination of factors results in a complex nonconvex mixed-integer nonlinear programming problem, making the direct derivation of a global optimal solution particularly challenging.

\subsection{Adaptive Rank, Bandwidth and ICV Selection Algorithm}
To address this mixed integer nonlinear programming problem, we propose an adaptive rank, bandwidth and vehicle selection (ARBVS) algorithm to jointly optimize $\{b_n, \mu_n, r\}$. 
ARBVS employs a greedy strategy to efficiently minimize the objective function while rigorously adhering to constraints on delay and bandwidth, rendering it particularly effective for the dynamic, resource-constrained environments typical of ICVs.

There is an implicit constraint within LoRA .
LoRA is introduced to reduce the size of the FL training model and to decrease communication overhead. 
The LoRA rank $r$ significantly influences the size of the FL model. It is essential to ensure that the LoRA model does not exceed the size of the original model, as this would cause the approach to degenerate into standard FedAvg. 
Consequently, an upper bound on the rank, denoted as $\mathcal{R}$. 
When disregarding bias weights, $r$ is linearly related to the LoRA model and can be readily computed from the number of parameters in both the original and the LoRA models. Moreover, the lower bound for $r$ is $1$, and it must be an integer, we have $r \in [1, \mathcal{R}]$. 
Thus, by enumerating over each possible value of $r$, the objective function’s value can be determined for each case, with the minimum value corresponding to the optimal solution. 
The optimization problem $\mathcal{P}$ becomes:
\begin{equation}
    \label{eq:optimization_prime}
    \begin{aligned}
    \mathcal{P'}: \min_{\{b_n, \mu_n\}} & \quad \frac{\eta \beta \sigma^2}{|\mathcal{S}|} + M^2(K - Lr) \\
    \text{s.t.} \quad & \eqref{optimization:c1} - \eqref{optimization:c5}.
    \end{aligned}
\end{equation}

Since $r$ is given in each case and only $b_n$ is a variable in \eqref{optimization:c5} and \eqref{eq:st} - \eqref{eq:C_n^u}, the expression associated with $b_n$ reduces to:
\begin{equation}
    b_n \log_2(1 + \frac{P_n h_n}{b_n N_0})
    \ge 
    C_n^u 
    \geq 
    \frac{d r N_{\text{LoRA}}}{T_n^{\text{st}} - \frac{\gamma E \bar{w}_n r  N_{\text{LoRA}}}{f_n N}},
    \label{eq:b_min}
\end{equation}

A lower bound can be derived by dichotomizing $b_n$, which denotes the minimum bandwidth $b_n^{min}$ required for each ICV to complete both training and uploading within the current region. 
Under a limited bandwidth constraint, the number of ICVs participating is maximized by allocating the minimum required bandwidth $b_n^{min}$ to each ICV.

The ICVs are selected using a greedy algorithm. 
First, all values of $b_n^{min}$ are sorted in ascending order using an efficient sorting algorithm. 
Next, ICVs are selected sequentially, beginning with the smallest bandwidth requirement, and their bandwidths are accumulated until the condition $\sum b_{|\mathcal{S}|} > B$ is met, at which point the selection process terminates. 
The number of ICVs selected at this stage represents the maximum achievable for the current rank $r$.

By enumerating the local optimums under different $r$-constraints, an effective allocation policy is obtained.

\subsection{Complexity Analyses}
The ARBVS's overall complexity primarily influenced by the enumeration of the rank $r$, the adaptive communication bandwidth allocation, and the adaptive ICV selection strategy. 
First, for enumerating the rank $r$ where $r \in [1, \mathcal{R}]$, the process incurs a complexity of $\mathcal{O}(\mathcal{R})$, with $\mathcal{R}$ determined by the ratio of the number of parameters in the native model to those in the LoRA model. 
Second, regarding the adaptive communication bandwidth allocation and adaptive ICV selection policy, the minimum communication bandwidth $b_n^{min}$ must be computed for each ICV in the current scenario. 
For each ICV, the minimum feasible bandwidth $b_n^{min}$ is obtained by solving a one-dimensional nonlinear equation via a numerical root-finding method, with a target accuracy $\epsilon$. 
The required number of iterations is $O(\log_2(1 / \epsilon))$, which is independent of $|U|$. Therefore, computing all $b_n^{min}$ has complexity $O(|U|\log_2(1 / \epsilon))$.
After that, ARBVS sorts all ICVs according to $b_n^{min}$, which costs $O(|U| \log_2{|U|})$, and performs a greedy selection in $O(|U|)$ time.
Combining these stages, the overall computational complexity of the algorithm is $\mathcal{O}(\mathcal{R}|U| \max (\log_2(1 / \epsilon), \log_2{|U|}))$, which grows essentially linearly with the number of ICVs and is therefore highly efficient for the considered IoV scenarios.

\section{EXPERIMENTS RESULTS}
\label{experiment}
We present experiments that assess the performance of the Fed-DLoRA architecture across multiple datasets in various environmental settings.

\subsection{Experimental settings}
1) \textit{Datasets}: Experiments use CIFAR-10 and CIFAR-100 datasets \cite{krizhevsky2009learning} from the official \textit{torchvision} library without preprocessing.

Following \cite{chang2024efficient}, we simulate a C-V2X-based single-base station ICV mobility scenario as described in Section \ref{scence}. 
Each ICV is characterized by parameters including position, speed, and computational capacity, which collectively constrain its participation in FL. 
In each FL round, a total of $20$ ICVs within the base station’s coverage area are engaged. Comparative experiments evaluate various FL algorithms using the CIFAR-10 and CIFAR-100 datasets under both independent and identically distributed (IID) and non-IID data distributions. For the IID, training data is randomly sampled and assigned to each ICV. For the non‑IID, we construct separate non‑IID partitions for CIFAR‑10 and CIFAR‑100. For CIFAR‑10, each ICV is randomly assigned training samples from $3$ randomly selected classes. For CIFAR‑100, each ICV is randomly assigned training samples from $30$ randomly selected classes. The test set remains unchanged.

2) \textit{Experimental model}: All experiments are conducted based on a scaled-down version of ResNet18\cite{he2016deep} with a channel basewidth of $32$. 
The full-rank ResNet18 includes a total of about $2.75$ million parameters (excluding bias terms).
LoRA modules are primarily applied to the convolutional layers of ResNet18, which account for the majority of model parameters. 
For implementation consistency within the unified framework, LoRA is also applied to the final fully connected layer, although its contribution to parameter reduction is relatively minor.
The primary performance benefits of our framework stem from the overall parameter-efficient updates and their synergy with dynamic resource scheduling, rather than the low-rank treatment of specific small components.
All models in our experiments are trained from scratch with random initialization, and no pre-trained weights are used.
This setting is adopted to ensure a fair comparison across all methods, as none of the baseline methods rely on pre-trained models.

3) \textit{Parameter setting}: All experiments are conducted on an identical hardware platform with consistent hyper-parameter configurations to ensure the reproducibility and fairness of the experimental results.

\begin{itemize}
    \item  \textit{Hardware setup}: All simulation experiments are completed on a deep learning server. The server is equipped with NVIDIA RTX 4090 GPU, Intel 14900KF CPU, the operating system is Ubuntu 20.04, and the programming environment is Python 3.10 and PyTorch 2.3.0.
    \item \textit{FL training hyperparameters}: Stochastic gradient descent (SGD) is chosen as the optimiser. Batch size is $32$, number of local training epochs $E$ is $4$. Loss function is cross entropy. learning rate $\eta$ is $0.01$.
    \item \textit{C-V2X communication parameters}: Communication-related parameters are primarily referenced in the literature \cite{pervej2023resource, chang2024efficient}. The simulation environment consists of a single-base station C-V2X scenario, where the base station is positioned at the center of the coverage area with a signal radius of $500$ m. The ICV speed, denoted by $v_n$, ranges from $12$ to $22$ m/s. The total communication bandwidth $B$ is $10$ MHz, and the ICV transmission power $P_n$ is set at $28$ dBm. The path loss is modeled as $128.1+37.6 \log_2(X)$, where $X$ represents the distance in kilometers between the ICV and the BS. Furthermore, the channel noise power $N_0$ is specified as $-174$ dBm.
    \item \textit{ICV arithmetic parameters}: For both IID and non-IID data distributions, each ICV is randomly assigned $3,000$ training samples. The CPU clock cycles required for processing each sample, denoted by $\bar{w}_n$, fall within the range of $[0.8, 1.2] \times 10^7$ cycles, and the CPU frequency of each ICV, represented by $f_n$, is set $[1.9, 3] \times 10^9$ Hz. $\mathcal{R}$ is set to $32$. $M$ is set to $0.1$.
\end{itemize}

4) \textit{Baseline method}:
Several representative FL algorithms are selected for comparative experiments.

\begin{itemize}
    \item \textit{FedAvg}: Clients upload local model parameters after training, and the server computes weighted averages to generate the updated global model.
    \item \textit{FedPT} \cite{sidahmed2021efficient}: Some model parameters are frozen throughout the FL training process to reduce the computational burden on the clients. We freeze the $2nd$ and $3rd$ residual block in ResNet18.
    \item \textit{FedRA}\cite{su2024fedra}: Integrates LoRA within FL architecture, where the server randomly assigns mask matrices to clients and performs weighted model aggregation.
\end{itemize}
\subsection{Performance Evaluation and Comparison Experiments}

\begin{figure}[t]
  \centering
    \begin{minipage}[b]{\linewidth}
      \subfigure[cifar10-iid-acc]{
        \includegraphics[width=0.46\linewidth]{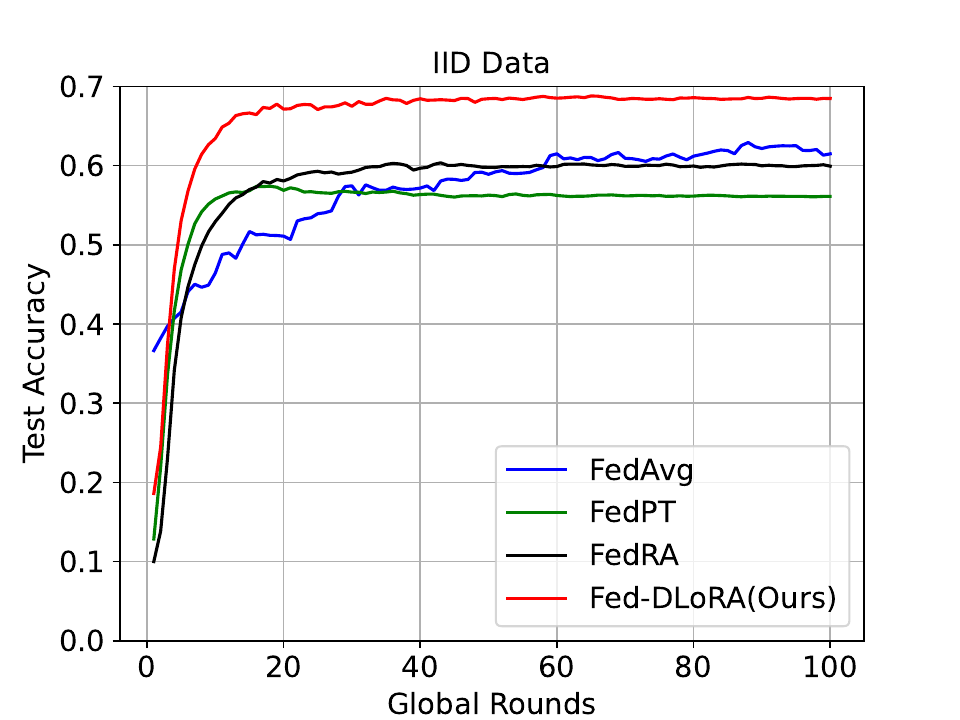}
      }
      \subfigure[cifar10-non-iid-acc]{
        \includegraphics[width=0.46\linewidth]{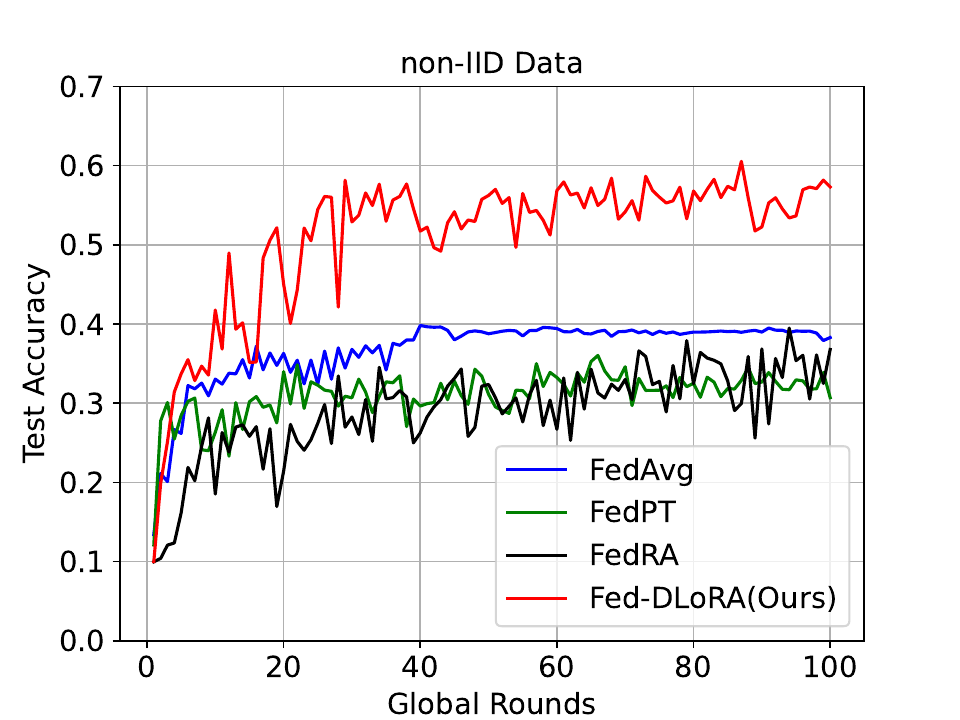}
      }
    \end{minipage}
    \caption{Test accuracy on CIFAR-10 with IID and non-IID dataset}
    \label{fig:cifar10}
\end{figure}

\begin{figure}[t]
    \centering
    \begin{minipage}[b]{\linewidth}
    \subfigure[cifar100-iid-acc]{
    \includegraphics[width=0.46\linewidth]{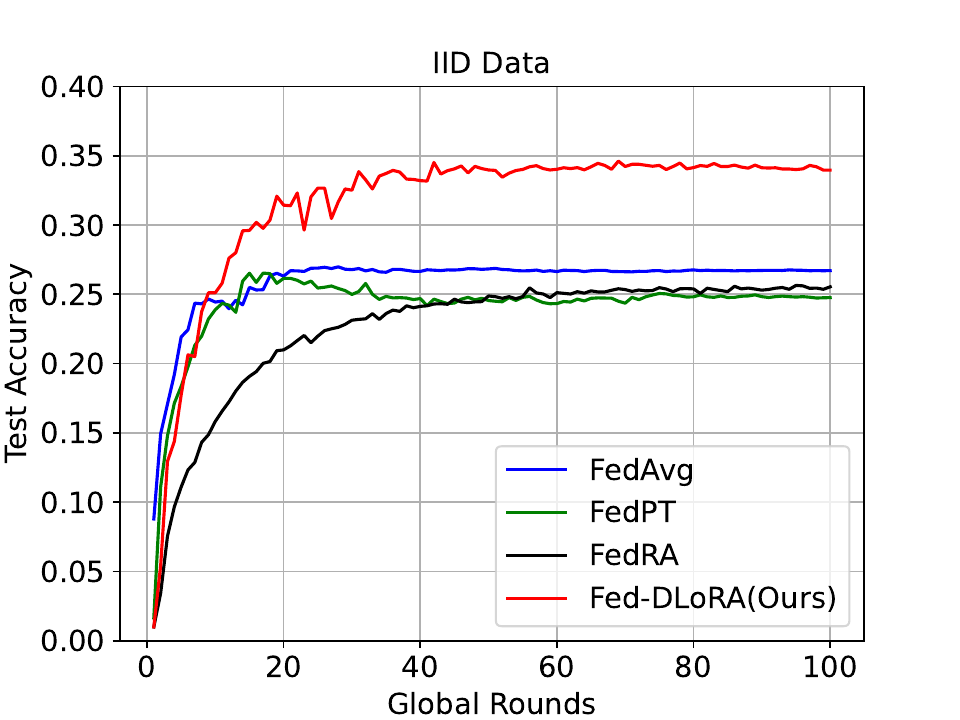}
    }
    \subfigure[cifar100-non-iid-acc]{
    \includegraphics[width=0.46\linewidth]{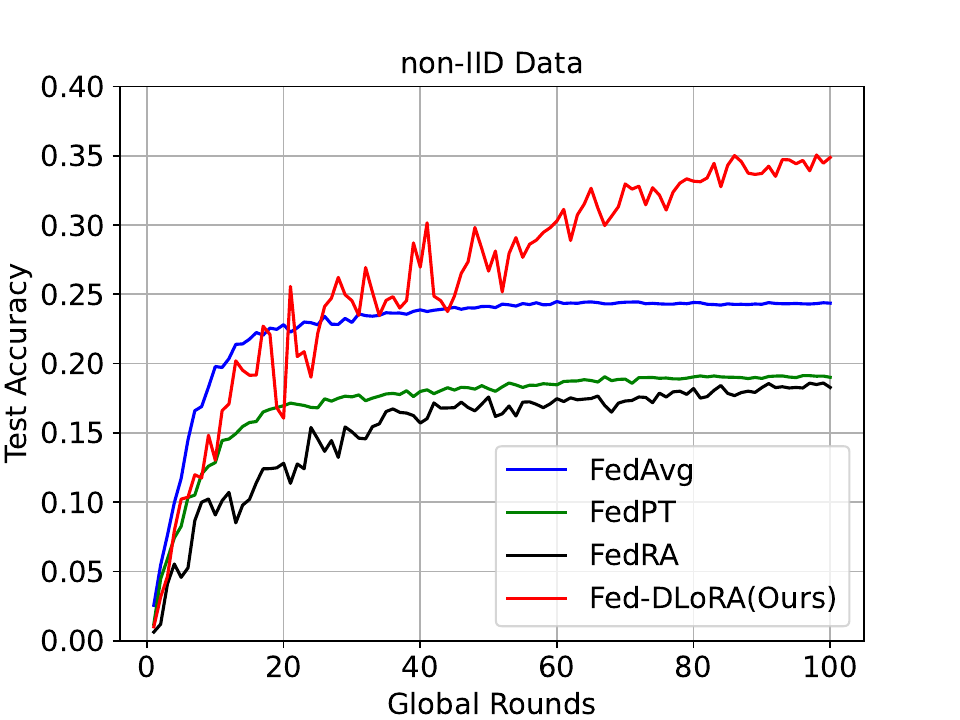}
    }
    \end{minipage}
    \caption{Test accuracy on CIFAR-100 with IID and non-IID dataset}
    \label{fig:cifar100}
\end{figure}

We systematically evaluate the performance of the Fed-DLoRA algorithm in comparison to three baseline methods. All experiments are conducted over 100 communication rounds, and the corresponding results are presented in Fig. \ref{fig:cifar10} and \ref{fig:cifar100}. 
The experimental graphs depict each model's performance, as measured by test accuracy, as a function of the communication rounds. 
Specifically, Fig. \ref{fig:cifar10}(a) and \ref{fig:cifar100}(a) display the outcomes under an IID data distribution, whereas Fig. \ref{fig:cifar10}(b) and \ref{fig:cifar100}(b) report the results under a non-IID. data distribution.

The experimental results indicate that Fed-DLoRA consistently outperforms the baseline methods by attaining higher accuracy and faster convergence across experiments.
For instance, as shown in Fig. \ref{fig:cifar10}(a), Fed-DLoRA achieves an accuracy of $66.50\%$ at the $20$-th communication round, markedly surpassing FedAvg $51.01\%$, FedPT $56.48\%$, and FedRA $57.87\%$. 
Similarly, Fig. \ref{fig:cifar100}(a) demonstrates that Fed-DLoRA exhibits superior accuracy in all rounds, except for a slight underperformance during the initial rounds. 
Moreover, in the experiments with non-IID data distributions, as illustrated in Figs. \ref{fig:cifar10}(b) and \ref{fig:cifar100}(b), Fed-DLoRA consistently shows a significant performance advantage.

Although the local model accuracy of Fed-DLoRA and FedRA is modest during the initial training phase, it subsequently shows rapid improvement before reaching convergence. 
This phenomenon is primarily attributed to the integration of the LoRA module in the training phase of both methods, which initially involves a small number of parameters in the gradient update. 
As the number of communication rounds increases, the LoRA mechanism enhances the lightness of the FL architecture, enabling more ICVs to participate in the FL process, thereby gaining richer training experience and accelerating model convergence. 
Notably, under non-IID, the participation of more ICVs brings significant performance improvement. 
This is because in the non-IID, each ICV is assigned to a random category, and the small number of ICVs participating in FL in FedAvg and FedPT may lead to certain data categories failing to participate in the FL training process, ultimately resulting in poor accuracy. 
FedRA also utilizes LoRA to achieve a lighter FL architecture, which promotes the participation of more ICVs. 
However, due to its stochastic aggregation strategy, it lacks decision-making mechanisms for factors such as optimal rank, optimal bandwidth allocation, and ICV selection, which ultimately results in suboptimal performance.
\subsection{Latency evaluation}
\begin{figure}[t]
    \centering
    \includegraphics[width=0.8\linewidth]{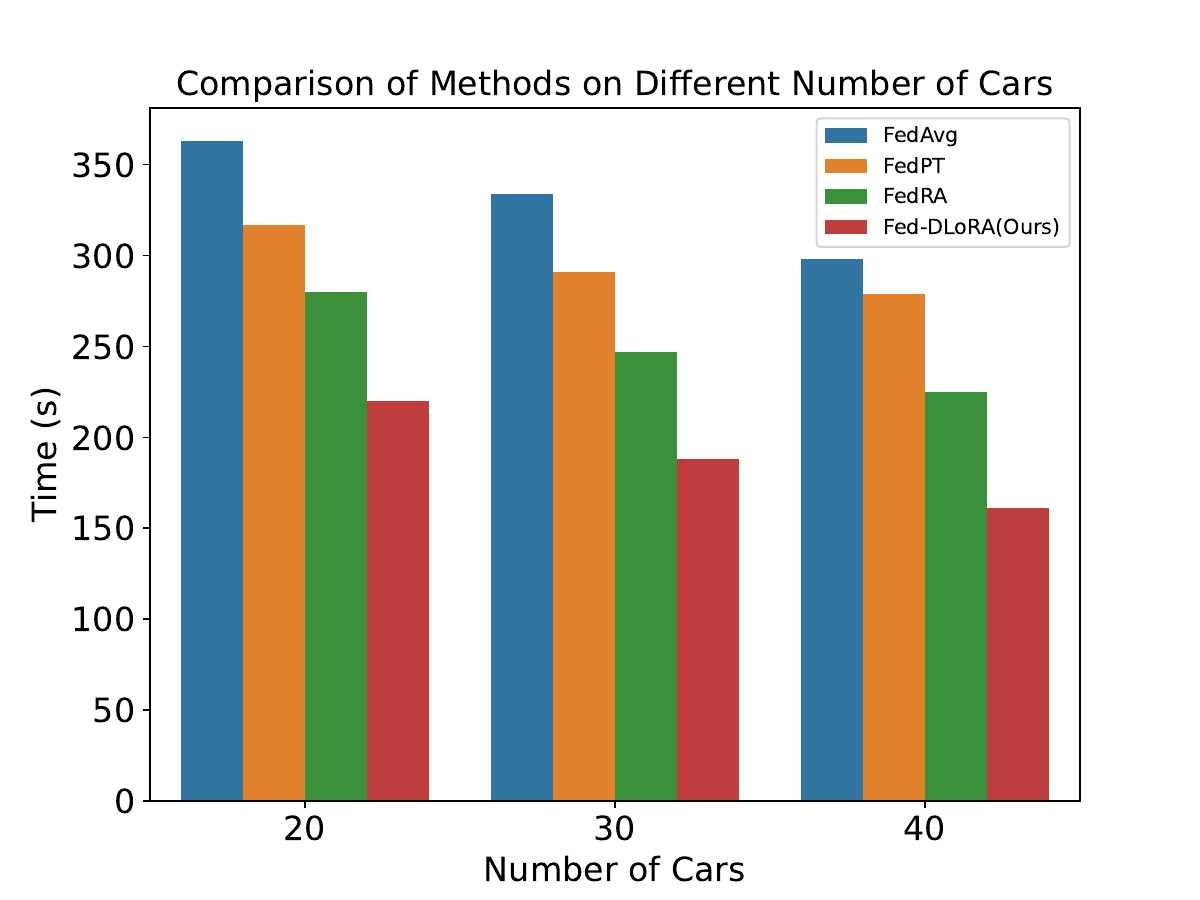}
    \caption{Comparison of time to reach target accuracy for different numbers of ICVs.}
    \label{fig:speed}
\end{figure}

Based on the comparison experiments, we further evaluate the convergence speed of the proposed model. 
Fig. \ref{fig:speed} illustrates the performance of Fed-DLoRA compared with the baseline algorithms under the IID CIFAR‑10 setting, where we measure the time required to reach $50\%$ test accuracy for different numbers of participating ICVs. 
The vertical axis represents the total elapsed time, while the horizontal axis denotes the number of ICVs in the system.

Due to the limited communication bandwidth, it is infeasible for all ICVs to participate in each FL round when running FedAvg, FedPT, and FedRA. 
Therefore, for these baseline methods, we randomly select $20\%$, $30\%$, and $50\%$ of the ICVs for training in each round, respectively, and allocate the available bandwidth evenly among the selected ICVs.

As shown in Fig. \ref{fig:speed}, Fed-DLoRA reaches the target accuracy faster than all baseline methods for every tested number of ICVs. 
Specifically, to reach the $50\%$ accuracy target, Fed-DLoRA reduces the total training time by $39.39\%$, $30.60\%$, and $21.43\%$ compared to FedAvg, FedPT, and FedRA, respectively. 
The completion time of all methods decreases slightly as the number of ICVs increases, indicating that involving more ICVs generally accelerates convergence. 
However, because the ARBVS algorithm in Fed-DLoRA adopts a greedy scheduling strategy, the growth in the number of participating ICVs does not translate linearly into a larger effective participation set, leading to only a modest reduction in the time needed to achieve the target accuracy. 
These results imply that the performance of Fed-DLoRA is mainly constrained by the total communication bandwidth and the computing capabilities of the ICVs, rather than merely by the number of available ICVs, which also highlights the robustness and stability of the proposed algorithm.

\subsection{Communication efficiency evaluation}
\begin{figure}[t]
    \centering
    \includegraphics[width=0.8\linewidth]{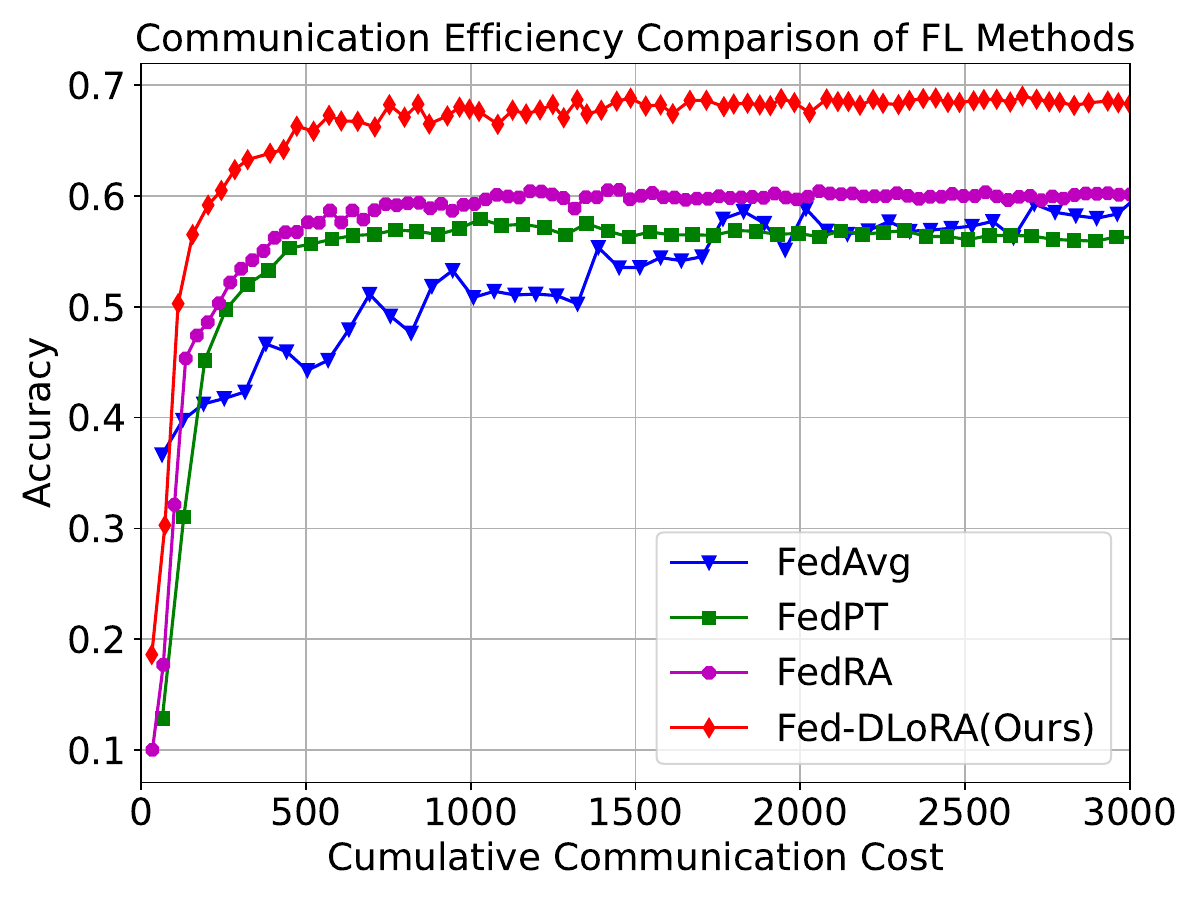}
    \caption{FL Communication Efficiency Comparison Experiment.}
    \label{fig:UplinkCost-acc}
\end{figure}
Communication efficiency is a key metric for evaluating the utilization of communication resources during model training. 
Based on the IID data distribution model of CIFAR-10, we assessed the relationship between the cumulative uplink communication cost and test accuracy, as shown as Fig. \ref{fig:UplinkCost-acc}. 
The cumulative communication cost for the uplink is defined as the total amount of data transmitted from the beginning of training to the current communication round. 
Mathematically, this is expressed as $\sum_T\sum_{\mathcal{S}} dN'$, where $T$ denotes the number of communication rounds, $\mathcal{S}$ represents the number of ICVs participating in federated learning, $d$ is the bit-width of model, and $N'$ indicates the number of model parameters.

Each data point in Fig. \ref{fig:UplinkCost-acc} represents the test accuracy of an algorithm at a specific communication round.
Due to differences in uplink transmission volumes among the algorithms, we restrict our analysis to the performance within the first $3000$ MB of data. 
To achieve $50\%$ accuracy, Fed-DLoRA saves $77.49\%$, $51.55\%$, and $33.90\%$ uplink communication cost compared to FedAvg, FedPT, and FedRA, respectively.
As illustrated in Fig. \ref{fig:UplinkCost-acc}, algorithms incorporating LoRA, Fed-DLoRA, and FedRA exhibit a denser distribution of data points, indicating a lower communication cost per round. 
Crucially, this reduction has direct implications for energy efficiency. In wireless IoV scenarios, the cumulative communication cost serves as a direct proxy for energy consumption, as transmission energy is approximately proportional to the transmitted data volume. 
Consequently, the significant reduction in communication traffic demonstrated above directly translates to energy savings. 
This confirms that our proposed method not only accelerates training but also substantially prolongs the battery life of vehicular onboard units, highlighting its practical significance for energy-constrained IoV applications.
\subsection{Performance evaluation of ICV selection algorithms}
\begin{figure}[t]
    \centering
    \includegraphics[width=0.8\linewidth]{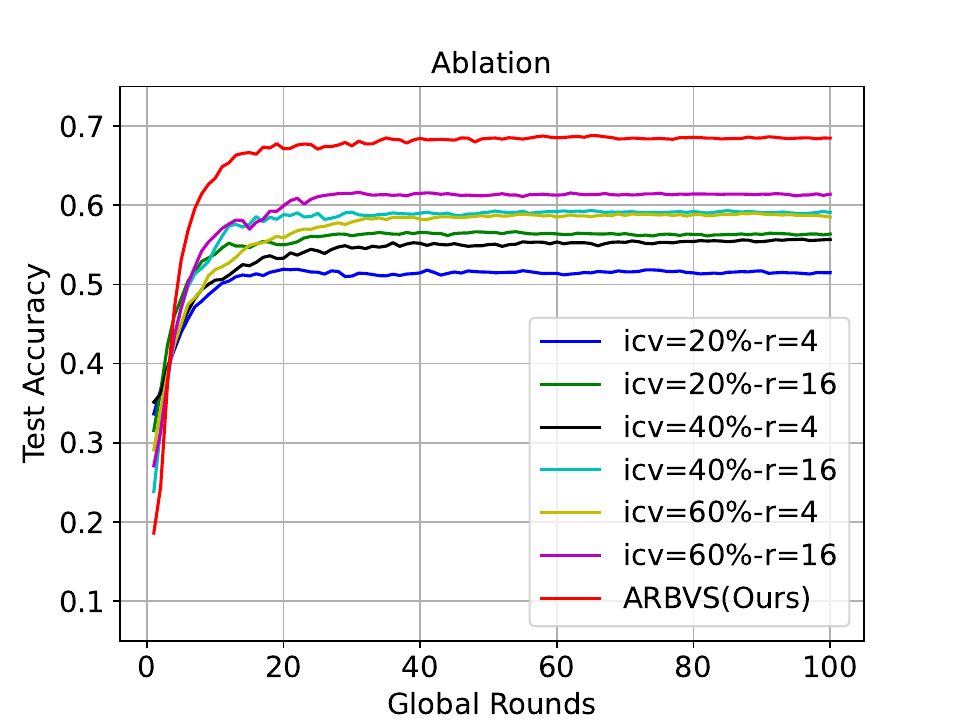}
    \caption{ICV Scheduling Comparison Experiment.}
    \label{fig:ablation_results}
\end{figure}
ARBVS is a multi-objective optimization algorithm that integrates adaptive rank selection, bandwidth allocation, and ICV selection strategy. 
To assess its performance, we compare ARBVS with the random scheduling algorithm presented in \cite{yang2019scheduling}, and the experimental results are depicted in Fig. \ref{fig:ablation_results}. 
The experiments are conducted under the IID for the CIFAR-10 dataset, with the number of ICVs fixed at $U = 20$, and employing an integrated ResNet18 model augmented with LoRA. 
For the stochastic scheduling algorithm, $20\%$, $40\%$, and $60\%$ of the ICVs are randomly selected to participate in the federated learning training, respectively. 
It is worth noting that under the strict latency constraints of IoV, the available communication bandwidth directly determines the maximum number of supportable ICVs. Therefore, comparing different ICV participation rates $(20\%, 40\%, 60\%)$ serves as an ablation study for bandwidth resources, while comparing different rank settings $(r=4,16)$ isolates the impact of the LoRA rank.

As illustrated in Fig. \ref{fig:ablation_results}, all LoRA-based training algorithms converge rapidly within approximately $20$ rounds. 
When using the same LoRA rank, an increase in the number of ICVs (implying sufficient bandwidth allocation) corresponds to a higher final test accuracy. 
Likewise, for a fixed number of ICVs (fixed bandwidth consumption), a higher LoRA rank results in improved test accuracy. 
These findings confirm the independent effects of bandwidth and rank: bandwidth improves performance by enabling larger-scale aggregation, while rank improves performance by enhancing the local model's learning capacity.
These findings corroborate the theoretical analysis presented in Section \ref{objFun}. 
Moreover, the ARBVS algorithm, which adaptively selects the adaptive LoRA rank, bandwidth allocation, and ICV configuration in each communication round, promotes the participation of more ICVs and higher LoRA rank during federated training, ultimately leading to the highest test accuracy. 
The comparative experimental results comprehensively validate the effectiveness of the ARBVS algorithm.

\section{CONCLUSION}
\label{conclusion}
In this paper, we propose the Fed-DLoRA algorithm to address the limitations of conventional federated learning regarding communication, computational efficiency, and dynamic vehicular scheduling. 
Specifically, it achieves the dual optimization of training parameter compression and communication load reduction. 
This is accomplished by introducing the LoRA and constructing lightweight sub-models, while keeping the parameters of the backbone model frozen.  
We conduct a detailed analysis of Fed-DLoRA's convergence properties and demonstrate that these properties are intrinsically linked to both the ICV scheduling strategy and the LoRA rank. 
Based on this convergence analysis, formulate a joint optimisation problem.
Facilitates the co-optimization of LoRA rank configuration, bandwidth allocation, and ICV selection, implemented via the ARBVS algorithm that employs a hybrid enumeration-greedy strategy. 
Simulation experiments conducted on public datasets within a single-base-station dynamic vehicular scenario indicate that, 
under the considered settings and model architectures, Fed-DLoRA surpasses existing algorithms such as FedAvg, FedPT, and FedRA in learning accuracy, convergence speed, and communication efficiency. We note that these findings are tied to the specific task constraints, and the performance gains are primarily attributed to the synergy between parameter-efficient updating and adaptive scheduling.

In future work, we plan to extend Fed-DLoRA to LLMs, where we will investigate joint optimization of local LoRA fine-tuning, aggregation strategies, and resource allocation to realize ubiquitous intelligence in LLM-based IoV systems.

\bibliographystyle{IEEEtran}
\bibliography{IEEEabrv,reference}

\appendix
\setcounter{equation}{0}
\renewcommand\theequation{A.\arabic{equation}}

\section*{Proof of Theorem \ref{thm:1}}
\label{appendix:proof_thm1}

This appendix provides the detailed derivation for Theorem \ref{thm:1}. Starting from the expansion in \eqref{eq:fl2C1C2C3}, the upper bound is given by:
\begin{equation}
    \mathbb{E}[\mathcal{L}(x(t+1))] \le C_1 + C_2 + C_3.
    \label{eq:appendix_start}
\end{equation}

The term $C_3$ can be expanded using the variance decomposition:
\begin{equation}
    \small
    \begin{aligned}
        C_3 =
    \frac{\eta^2 \beta}{2} \left ( \mathbb{E}\left [ \left \| \sum_{n=1}^{|\mathcal{S}|} \alpha_n (g_n(x_n(t)) - \nabla \mathcal{L}_n(x_n(t))) \right \|^2 \right] \right.\\
    \left. + \mathbb{E} \left [ \left \| \sum_{n=1}^{|\mathcal{S}|} \alpha_n \nabla \mathcal{L}_n(x_n(t)) \right \|^2 \right ] \right ).
    \end{aligned}
    \label{eq:C31}
\end{equation}
Under Assumption 3 (bounded variance), the first term in \eqref{eq:C31} is bounded by $\sigma^2 \sum \alpha_n^2$, leading to:
\begin{equation}
    \small
    \begin{aligned}
        C_3 \le \frac{\eta^2 \beta}{2} \left (\sigma^2\sum_{n=1}^{|\mathcal{S}|} \alpha_n^2 
        + \mathbb{E} \left [ \left \| \sum_{n=1}^{|\mathcal{S}|} \alpha_n \nabla \mathcal{L}_n(x_n(t)) \right \|^2 \right ] \right ).
    \end{aligned}
    \label{eq:C32}
\end{equation}

For the term $C_2$, by the linear nature of the expectation can be transformed:
\begin{equation}
    \small
    \begin{aligned}
        C_2
        &= -\eta \mathbb{E}\left[ \left< \mathbb{E}[\nabla \mathcal{L}(x(t))], \mathbb{E}\left [ \sum_{n=1}^{|\mathcal{S}|} \alpha_n g_n(x_n(t)) \right] \right> \right].
    \end{aligned}
    \label{eq:C21}
\end{equation}

Then it can be further transformed according to the unbiased gradient of Assumption2:
\begin{equation}
    \small
    \begin{aligned}
        C_2 
        &= -\eta \mathbb{E}\left[ \left< \nabla \mathcal{L}(x(t)), \sum_{n=1}^{|\mathcal{S}|} \alpha_n \nabla \mathcal{L}_n(x_n(t)) \right> \right] \\
        &= \frac{\eta}{2} \left( \mathbb{E} \left [ \left \| \nabla \mathcal{L}(x(t)) - \sum_{n=1}^{|\mathcal{S}|} \alpha_n \nabla \mathcal{L}_n(x_n(t)) \right \|^2 \right ]  \right.\\
        &\quad \left.- \mathbb{E} \left [ \left \| \nabla \mathcal{L}(x(t)) \right \|^2 \right ] - \mathbb{E} \left [ \left \| \sum_{n=1}^{|\mathcal{S}|} \alpha_n \nabla \mathcal{L}_n(x_n(t)) \right \|^2 \right ] \right).
    \end{aligned}
    \label{eq:C22}
\end{equation}
By Jensen's inequality, the first term in \eqref{eq:C22} satisfies:
\begin{equation}
    \small
    \begin{aligned}
        \mathbb{E} \left [ \left \| \nabla \mathcal{L}(x(t)) - \sum_{n=1}^{|\mathcal{S}|} \alpha_n \nabla \mathcal{L}_n(x_n(t)) \right \|^2 \right ]\\
    \le
    \sum_{n=1}^{|\mathcal{S}|} \alpha_n \mathbb{E} \left [ \left \| \nabla \mathcal{L}(x(t)) - \nabla \mathcal{L}_n(x_n(t)) \right \|^2 \right ].
    \end{aligned}
    \label{eq:C23}
\end{equation}

Substituting \eqref{eq:C32}, \eqref{eq:C22}, and \eqref{eq:C23} into \eqref{eq:appendix_start}, and grouping terms associated with $\mathbb{E} [ \| \sum \alpha_n \nabla \mathcal{L}_n(x_n(t)) \|^2 ]$, results in:
\begin{equation}
    \small
    \begin{aligned}
    \mathbb{E}[\mathcal{L}(x(t+1))] 
    &\le \mathbb{E}[\mathcal{L}(x(t))] - \frac{\eta}{2} \mathbb{E} \left[ \|\nabla \mathcal{L}(x(t))\|^2 \right] \\
    &\quad + \frac{\eta^2 \beta}{2} \sigma^2 \sum_{n=1}^{|\mathcal{S}|} \alpha_n^2 \\
    &\quad + \frac{\eta}{2} \sum_{n=1}^{|\mathcal{S}|} \alpha_n \mathbb{E} \left[ \| \nabla \mathcal{L}(x(t)) - \nabla \mathcal{L}_n(x_n(t)) \|^2 \right] \\
    &\quad + \underbrace{\left( \frac{\eta^2 \beta}{2} - \frac{\eta}{2} \right)}_{\le 0} \mathbb{E} \left[ \left\| \sum_{n=1}^{|\mathcal{S}|} \alpha_n \nabla \mathcal{L}_n(x_n(t)) \right\|^2 \right] \\
    &\le \mathbb{E}[\mathcal{L}(x(t))] - \frac{\eta}{2} \mathbb{E} \left[ \|\nabla \mathcal{L}(x(t))\|^2 \right] \\
    &\quad + \frac{\eta^2 \beta}{2} \sigma^2 \sum_{n=1}^{|\mathcal{S}|} \alpha_n^2 \\
    &\quad + \frac{\eta}{2} \sum_{n=1}^{|\mathcal{S}|} \alpha_n \mathbb{E} \left[ \| \nabla \mathcal{L}(x(t)) - \nabla \mathcal{L}_n(x_n(t)) \|^2 \right],
    \end{aligned}
    \label{eq:appendix_combined}
\end{equation}
The last term in the right-hand side of \eqref{eq:appendix_combined} is only related to the gradient of the local model, which makes it difficult to compute accurate data. Assuming that $\eta \le \frac{1}{\beta}$ , since the learning rate $\eta > 0$, then we have $\frac{\eta^2 \beta}{2} - \frac{\eta}{2} \le 0$, then the right-hand side of \eqref{eq:appendix_combined} can be further scaled up.

\end{document}